\newcommand {\rf} {\mathit{rank}}
\newcommand {\lingconc} {\mathcal{S}}
\newcommand {\nott} {\lnot}
\newcommand {\sx} {\langle}
\newcommand {\dx} {\rangle}
\newcommand {\emme} {\mathcal{M}}
\newcommand {\enne} {\mathcal{N}}
\newcommand {\elle} {\mathcal{L}}
\newcommand {\unione} {\cup}
\newcommand {\tc} {\mid}
\newcommand{\tip}{{\bf T}}
\newcommand{\alc}{\mathcal{ALC}}
\newcommand{\shiq}{\mathcal{SHIQ}}
\newcommand{\alct}{\mathcal{ALC}+\tip}
\newcommand{\el}{\mathcal{EL}^{\bot}}
\newcommand{\eltm}{\mathcal{EL}^{\bot} \tip_{min}}
\newcommand{\alctr}{\mathcal{ALC}+\tip_{\textsf{\tiny R}}}
\newcommand{\alctre}{{\mathcal{ALC}^{\Ra}\tip}_{\tiny E} }
\newcommand{\alctrse}{{\mathcal{ALC}^{\Ra}\tip}_{\tiny S} }
\newcommand{\dlltm}{\mbox{\em DL-Lite}_{\mathit{c}}\tip_{min}}
\newcommand{\be}{\begin{enumerate}}
\newcommand{\ee}{\end{enumerate}}
\newcommand{\hide}[1]{}
\def \cases{\left \{\begin{array}{l}}
\def \endcases{\end{array}\right .}
\newcommand {\Ra} {{\bf R}}
\newcommand {\bes} {\begin{description}}
\newcommand{\ens} {\end{description}}
\newcommand {\beq} {\begin{quote}}
\newcommand {\enq} {\end{quote}}
\newcommand {\bit} {\begin{itemize}}
\newcommand {\enit} {\end{itemize}}
\newcommand {\bbox}{\square}
\newenvironment{pozz}{\color{black}}{\color{black}}
\title{Reasoning about multiple aspects in DLs: \\
Semantics and Closure Construction}
\author{Laura Giordano \inst{1} \and Valentina Gliozzi \inst{2}}
\authorrunning{L. Giordano and V.Gliozzi}
\titlerunning{ }
\institute{DISIT - Universit\`a del Piemonte Orientale - 
 Alessandria, Italy - \email{laura.giordano@uniupo.it} \and
Center for Logic, Language and Cognition and Dipartimento di Informatica \\ 
 Universit\`a di Torino, Italy\\ \email{valentina.gliozzi@unito.it}
}
\begin{document}
\maketitle

 \begin{abstract} 
Starting from the observation that rational closure has the undesirable property of being an ``all or nothing" mechanism,
we here propose a multipreferential semantics, which enriches the preferential semantics underlying rational closure 
in order to separately deal with the inheritance of different properties in an ontology with exceptions.
We provide a multipreference closure mechanism which is sound with respect to the multipreference semantics.
\end{abstract}

\section{Introduction}
Reasoning with exceptions has been widely studied within non-monotonic reasoning in Artificial Intelligence and in Description Logics (DLs).
In particular, a lot of work has been devoted to extending DLs with non-monotonig formalisms to allow reasoning about {prototypical properties} of individuals
\cite{Straccia93,baader95b,donini2002,eiter2004,lpar2007,AIJ13,AIJ15,kesattler,sudafricaniKR,FI09,bonattilutz,casinistraccia2010,rosatiacm,hitzlerdl,KnorrECAI12,CasiniDL2013,bonattiAIJ15}.  

In this paper we propose an extension of {\em rational closure} \cite{whatdoes} for dealing with multiple preferences.
One of the main difficulties of rational closure to deal with inheritance of  defeasible properties of concepts is the fact that one cannot
reason property by property:  if a subclass of a class $C$ is exceptional with respect to $C$ for a given property, 
it does not inherit any of the defeasible properties of $C$.  
This is the ``all or nothing" behavior of rational closure. 

Consider the following version of the classic birds/penguins example: 

{ \em

Typical birds fly

Penguins are birds

Typical penguins do not fly

Typical birds have nice feather}

\noindent By rational closure, penguins (being exceptional birds concerning the property of  flying) do not inherit any of the typical properties of birds.

On the contrary, one could expect penguins to inherit the property of having nice feather, for which they are not exceptional.
More generally, we would like to reason independently on the inheritance of the properties of one concept by a more specific one.
This is what Lehmann calls  ``presumption of independence" \cite{Lehmann95}: even if typicality is lost with respect to one property,
we may still presume typicality with respect to another, unless there is reason to the contrary.


In this paper, we address this problem from the semantic point of view.
Starting from the preferential semantics underlying rational closure introduced by Lehmann and Magidor \cite{whatdoes} for propositional logics and 
extended to the description logic $\alc$ in \cite{AIJ15}, we consider a preferential semantics 
which allows to reason about typicality with respect to different aspects.
In this semantics, an individual can be more preferred than another
when considering one aspect (e.g. being a good student), while less preferred when considering another aspect (e.g. being a good citizen).
In this enriched preferential semantics, different preference relations among domain elements are introduced, each one describing the preference of an individual over another one with respect to a given aspect/property.
We show that this semantics is a strengthening of rational closure. 

We provide a syntactic construction which is built over the rational closure and that we call { \em multipreference closure}. 
The multipreference closure is proved to be a sound construction for reasoning with multiple preferences, 
thus providing a sound approximation of the multipreference semantics.
As we will see, this construction is strongly  related with the lexicographic closure, proposed by Lehmann \cite{Lehmann95}
and extended to the description logic $\alc$ by Casini and Straccia \cite{Casinistraccia2012},
but it exploits a different specificity ordering,
as it will become clear in Section \ref{sec:Lexicographic}, where we compare the two constructions.
Another approach related to ours is  the logic of overriding ${\cal DL}^N$ by Bonatti, Faella, Petrova and Sauro \cite{bonattiAIJ15},
a nonmonotonic description logic which also allows reasoning independently about different defeasible properties. 
A difference with ${\cal DL}^N$ is that  ${\cal DL}^N$ leads to the inconsistency of prototypical concepts (thus requiring a repair of the KB) in those cases when a conflict among defaults cannot be solved by overriding. In our approach, as in the lexicographic closure, such conflicts are silently removed considering (skeptically) what holds in all the alternative bases (i.e., maximal consistent sets of defeasible inclusions), while ${\cal DL}^N$ computes a single base.
We discuss the relations between our approach and ${\cal DL}^N$ in Section  \ref{sec:Lexicographic},
where we also suggest that the MP-closure construction could be further approximated by a construction which only requires a polynomial number of entailment checks in $\alc$, following the approach in \cite{GiordanoICTCS2017}. \normalcolor

We will proceed as follows. In Section  \ref{sez:semantica} we recall the rational closure for description logics and its semantics.
In Sections \ref{sec:W-enriched} and \ref{sec:S-enriched}, we define the multipreference semantics by introducing the notions of enriched and strongly enriched models
of a knowledge base.
In Section \ref{sec:closure} we develop the multipreference closure construction, we show that it is sound with respect  to the semantics,
and we compare it with the lexicographic closure, with the logic of overriding ${\cal DL}^N$ \cite{bonattiAIJ15}, and  with related constructions. 
Sections \ref{Related work} and \ref{Conclusions} conclude the paper by assessing the contribution with respect to related work.


\section{The Rational Closure and its Semantics}\label{sez:semantica}
Let us briefly recall 
the logic $\alctr$  which is at the basis of a rational closure construction  proposed in \cite{AIJ15} for $\alc$.
The intuitive idea of $\alctr$ is to extend the standard description logic $\alc$ with concepts of the form $\tip(C)$,  whose intuitive meaning is that
$\tip(C)$ selects the {\em typical} instances of a concept $C$, to distinguish between the properties that
hold for all instances of concept $C$ ($C \sqsubseteq D$), and those that only hold for the typical
 instances ($\tip(C) \sqsubseteq D$).  
 
Let ${N_C}$ be a set of {\em concept names}, ${N_R}$ a set of {\em role names} and ${N_I}$ a set of {\em individual names}. 
The $\alctr$ language is defined as follows:
 $C_R:= A \tc \top \tc \bot \tc  \nott C_R \tc C_R \sqcap C_R \tc C_R \sqcup C_R \tc \forall R.C_R \tc \exists R.C_R$, and
   $C_L:= C_R \tc  \tip(C_R)$, where $A\in N_C$
and $R\in N_R$.  
    A {\em knowledge base} (KB) is a pair $K=( {\cal T}, {\cal A} )$, where the  TBox ${\cal T}$ contains a finite set
of  concept inclusions  $C_L \sqsubseteq C_R$, and the  ABox ${\cal A}$
contains a finite set of assertions of the form {$C_R(a)$ } and $aRb$, where $a, b \in N_I$ are individual names.
In the following we will call {\em non-extended concepts}  the concepts $C_R$ of the language, which do not contain the $\tip$ operator. \normalcolor

\noindent The semantics of $\alctr$  is defined
in terms of 
ranked models similar to those introduced in \cite{whatdoes}:
 ordinary models of $\alc$ are equipped with a \emph{preference relation} $<$ on
the domain, whose intuitive meaning is to compare the ``typicality''
of domain elements: $x < y$ means that $x$ is more typical than
$y$. Typical members of a concept $C$, instances of
$\tip(C)$, are the members $x$ of $C$ that are minimal with respect
to $<$ (such that there is no other member of $C$
more typical than $x$). In rational models $<$ is further assumed to be {\em modular} (i.e., for all $x, y, z \in \Delta$, if
$x < y$ then either $x < z$ or $z < y$) and {\em well-founded} \footnote{Since $\alctr$ has the finite model property, this is equivalent to having the Smoothness Condition, as shown in \cite{AIJ15}. We choose this formulation because it is simpler.} (i.e., there is no infinite $<$-descending chain, so that, if $S \neq \emptyset$, also $min_<(S) \neq \emptyset$). Ranked models characterize
$\alctr$.

\vspace{-0.15cm}
\begin{definition}[Semantics of $\alctr$ \cite{AIJ15}]\label{semalctr} A model $\emme$ of $\alctr$ is any
structure $\langle \Delta, <, I \rangle$ where: $\Delta$ is the
domain;   $<$ is an irreflexive, transitive,  modular and well-founded relation over
$\Delta$. 
$I$ is an interpretation function that maps each
concept name $C \in R_C$ to $C^I \subseteq \Delta$, each role name $R \in N_R$
to  $R^I \subseteq \Delta^I \times \Delta^I$
and each individual name $a \in N_I$ to $a^I \in \Delta$.
For concepts of
$\alc$, $C^I$ is defined in the usual way. For the $\tip$ operator, we have
$(\tip(C))^I = min_<(C^I)$.
\end{definition}
\vspace{-0.2cm}

As shown in \cite{AIJ15}, the logic $\alctr$ enjoys the  finite model property and finite $\alctr$ models can be equivalently defined by postulating 
the existence of
a function $k_{\emme}: \Delta \longmapsto \mathbb{N}$, where $k_{\emme}$ assigns a finite rank to each world: the rank $k_{\emme}$  of a domain element $x \in \Delta$ is the
length of the longest chain $x_0 < \dots < x$ from $x$
to a minimal $x_0$ (s. t. there is no ${x'}$ with  ${x'} < x_0$). The rank $k_\emme(C_R)$ of a concept $C_R$ in $\emme$ is $i = min\{k_\emme(x):
x \in C_R^I\}$.

A model $\emme$ satisfies a knowledge base $K=( {\cal T}, {\cal A} )$ if it satisfies  its TBox (and for all  inclusions $C \sqsubseteq D \in {\cal T}$, it $C^I \subseteq D^I$ holds), 
and its ABox (for all $C(a) \in {\cal A}$,  $a^I \in C^I$ and, for all $aRb \in {\cal A}$,  $(a^I,b^I) \in R^I$).
%
%
%
%
%
 A query $F$ (either an assertion $C_L(a)$ or an inclusion relation $C_L \sqsubseteq C_R$) is logically (rationally) entailed by a knowledge base $K$ ($K \models_{\alctr} F$) if $F$ holds in all models satisfying $K$. 

Although the typicality operator $\tip$ itself  is nonmonotonic (i.e.
$\tip(C) \sqsubseteq D$ does not imply $\tip(C \sqcap E)
\sqsubseteq D$), the logic $\alctr$ is monotonic: what is logically entailed by $K$ is still entailed by any $K'$ with $K \subseteq K'$.

In \cite{dl2013,AIJ15} a non monotonic construction of rational closure has been defined for $\alctr$, extending 
the notion of rational closure proposed in the propositional context by Lehmann and Magidor \cite{whatdoes}.  The definition is based on the notion of {\em exceptionality}. Roughly speaking $\tip(C) \sqsubseteq D$ holds (is included in the rational closure) of $K$ if $C$ (indeed, $C \sqcap D$) is less exceptional than $C \sqcap \neg D$. We briefly recall this construction and we refer to \cite{dl2013,AIJ15} for full details.
Here we only consider rational closure of TBox, defined as follows.

\begin{definition}[Exceptionality of concepts and inclusions]\label{definition_exceptionality}
Let $E$ be a TBox and $C$ a concept. $C$ is
said to be {\em exceptional} for $E$ if and only if $E \models_{\alctr} \tip(\top) \sqsubseteq
\neg C$. A \tip-inclusion $\tip(C) \sqsubseteq D$ is exceptional for $E$ if $C$ is exceptional for $E$. The set of \tip-inclusions of $E$ which are exceptional for $E$ will be denoted
as $\mathcal{E}$$(E)$.
\end{definition}

\noindent Given a $\alctr$  TBox,
it is possible to define a sequence of non increasing subsets of
a TBox ${\cal T}$ ordered according to the exceptionality of the elements $E_0 \supseteq E_1, E_1 \supseteq E_2, \dots$ by letting $E_0 ={\cal T}$ and, for
$i>0$, $E_i=\mathcal{E}$$(E_{i-1}) \unione \{ C \sqsubseteq D \in \mbox{TBox}$ s.t. $\tip$ does not occurr in $C\}$.
Observe that, being KB finite, there is
an $n\geq 0$ such that, for all $m> n, E_m = E_n$ or $E_m =\emptyset$.
A concept $C$ has {\em rank} $i$ (denoted $\rf(C)=i$) for TBox,
iff $i$ is the least natural number for which $C$ is
not exceptional for $E_{i}$. {If $C$ is exceptional for all
$E_{i}$ then $\rf(C)=\infty$ ($C$ has no rank).}

Rational closure builds on this notion of exceptionality:

\begin{definition}[Rational closure of TBox] \label{def:rational closureDL}
Let $K=( {\cal T}, {\cal A} )$ be an $\alctr$ knowledge base. The
rational closure, $\overline{\mathit{TBox}}$,  of the TBox ${\cal T}$, 
is defined as:

    $\overline{\mathit{TBox}}$=$\{\tip(C) \sqsubseteq D \tc \mbox{either} \ \rf(C) < \rf(C \sqcap \nott D)$ $\mbox{or} \ \rf(C)=\infty\} \ \unione$ 
   
    $\mbox{\ \ \ \ \ \ \ \ \ \ \ \ \ }\{C \sqsubseteq D \tc \ \mbox{KB} \ \models_{\alctr} C \sqsubseteq D\}$,
where $C$ and $D$ are $\alc$ concepts.
\end{definition}


A good property of rational closure is that, for $\alc$,
deciding  if an inclusion $\tip(C) \sqsubseteq D$ belongs to the rational closure of TBox is a problem in \textsc{ExpTime} \cite{AIJ15}.

In \cite{AIJ15} it is shown that the semantics corresponding to rational closure can be given in terms of {\em minimal canonical} $\alctr$ models. With respect to standard $\alctr$ models, in such models the rank of each domain element is as low as possible (each domain element is assumed to be as typical as possible). 
This is expressed by the following definition.

 \begin{definition}[Minimal models of $K$ (with respect to $TBox$)]\label{Preference between models in case of fixed valuation} 
Given $\emme = $$\langle \Delta, <, I \rangle$ and $\emme' =
\langle \Delta', <', I' \rangle$ , we say that $\emme$ is preferred to
$\emme'$ \hide{with respect to the fixed interpretations minimal
semantics} ($\emme \prec \emme'$) if:
$\Delta = \Delta'$,
$C^I = C^{I'}$ for all (non-extended) concepts $C$,
for all $x \in \Delta$, it holds that $ k_{\emme}(x) \leq k_{\emme'}(x)$ whereas
there exists $y \in \Delta$ such that $ k_{\emme}(y) < k_{\emme'}(y)$.

Given a knowledge base $K=( {\cal T}, {\cal A} )$, we say that
$\emme$ is a minimal model of $K$ (with respect to TBox)  if it is a model satisfying $K$ and  there is no
$\emme'$ model satisfying $K$ such that $\emme' < \emme$.
\end{definition}
Furthermore, the models corresponding to rational closure are canonical. This property, expressed by the following definition, is needed when reasoning about the (relative) rank of the concepts: it is important to have them all represented.

%

\begin{definition}[Canonical model\hide{with respect to $\lingconc$}]\label{def-canonical-model-DL}
Given $K=( {\cal T}, {\cal A} )$, 
a  model $\emme=$$\sx \Delta, <, I \dx$ satisfying $K$ is 
{\em canonical} 
 if for each set of concepts
$\{C_1, C_2, \dots, C_n\}$
consistent with $K$, there exists (at least) a domain element $x \in \Delta$ such that
$x \in (C_1 \sqcap C_2 \sqcap \dots \sqcap C_n)^I$. \end{definition}
%
%
%
\begin{definition}[Minimal canonical models (with respect to TBox)]\label{Preference between models wrt TBox}
$\emme$ is a minimal canonical model of $K$, 
if it is a canonical model of $K$ and it is minimal with respect $<$ (see Definition \ref{Preference between models in case of fixed
valuation}) among the canonical models of $K$.
\end{definition}

The correspondence between minimal canonical models and rational closure is established by the following key theorem. 

\begin{theorem}[\cite{AIJ15}]\label{Theorem_RC_TBox}
Let $K=( {\cal T}, {\cal A} )$ be a knowledge base and $C \sqsubseteq D$ a query.
Let $\overline{\mathit{TBox}}$ be the rational closure of $K$ w.r.t. TBox.
We have that $C \sqsubseteq D \in$ $\overline{\mathit{TBox}}$ if and only if $C \sqsubseteq D$ holds in all minimal  canonical models 
of $K$ with respect to TBox. 
\end{theorem}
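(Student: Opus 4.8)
The plan is to prove both directions of the biconditional by relating membership in $\overline{\mathit{TBox}}$ to the ranks of concepts, and then relating those ranks to the ranks assigned by minimal canonical models. The central bridge is a \emph{rank-correspondence lemma}: in every minimal canonical model $\emme$ of $K$, the semantic rank $k_\emme(C)$ of each $\alc$ concept $C$ coincides with its syntactic rank $\rf(C)$ from Definition~\ref{definition_exceptionality} (with $k_\emme(C)=\infty$ exactly when $\rf(C)=\infty$, i.e.\ $C$ is unsatisfiable). Once this lemma is in hand, both directions follow by unwinding the definitions.

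\textbf{Setting up the rank correspondence.} First I would establish, by induction on the exceptionality rank $i$, that for a minimal canonical model $\emme$ the set of domain elements of rank $\geq i$ is exactly the extension of the concepts exceptional for $E_{i-1}$. The canonicity hypothesis is what makes this work: since every $K$-consistent set of concepts is realized by some domain element, no concept is forced to have an artificially high rank for lack of a witness. Minimality, in turn, forces each element as low as possible, so no concept has an artificially high rank either. Combining the two, $k_\emme(C)$ is pinned down to the least $i$ for which $C$ is non-exceptional for $E_i$, which is precisely $\rf(C)$. The key sublemma here is that $\tip(\top)^I$ in a minimal canonical model consists exactly of the rank-$0$ elements, and more generally that the characterization of $\tip(C)$ as $min_<(C^I)$ interacts correctly with the layered structure of ranks; this I would justify using the well-foundedness and modularity of $<$, which guarantee that $<$ stratifies $\Delta$ into rank levels.

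\textbf{Deriving the two directions.} For the forward direction, suppose $C \sqsubseteq D \in \overline{\mathit{TBox}}$. If this inclusion is a non-typicality inclusion, it holds in all models of $K$ by definition, so in particular in all minimal canonical ones. If it is $\tip(C)\sqsubseteq D$ with either $\rf(C)<\rf(C\sqcap\neg D)$ or $\rf(C)=\infty$, then in any minimal canonical model $\emme$ the rank-correspondence lemma gives $k_\emme(C)<k_\emme(C\sqcap\neg D)$ (or $C^I=\emptyset$). Since the typical $C$-elements are the minimal-rank elements of $C^I$, and these must have rank strictly below that of any $\neg D$-element of $C$, every typical $C$-element satisfies $D$; hence $\tip(C)\sqsubseteq D$ holds in $\emme$. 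The converse direction is the contrapositive: if $C\sqsubseteq D\notin\overline{\mathit{TBox}}$, then $C$ is satisfiable and $\rf(C)\geq\rf(C\sqcap\neg D)$, so by the lemma there is a minimal-rank $C$-element that also satisfies $\neg D$, exhibiting a minimal canonical model in which the query fails; existence of at least one such model again rests on canonicity.

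\textbf{The main obstacle} I anticipate is the rank-correspondence lemma itself, and specifically showing that the syntactic construction via the sequence $E_0\supseteq E_1\supseteq\cdots$ tracks the semantic ranks exactly in \emph{both} inequalities simultaneously. The upper bound $k_\emme(C)\leq\rf(C)$ relies essentially on canonicity (a suitable witness exists at the expected level), while the lower bound $k_\emme(C)\geq\rf(C)$ relies on minimality (elements are pushed down only as far as the TBox permits). Keeping the induction clean across the interplay of these two properties, and handling the $\infty$ case uniformly, is the delicate part. Since this correspondence is exactly the content that the cited work \cite{AIJ15} develops, I would structure the proof so that this lemma is isolated and stated first, after which the theorem reduces to the routine definitional unwinding sketched above.
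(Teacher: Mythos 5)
The paper does not prove this theorem itself --- it is imported verbatim from \cite{AIJ15} --- so the only meaningful comparison is with the proof given there, and your sketch follows essentially the same route: the rank-correspondence lemma ($k_{\emme}(C)=\rf(C)$ in every minimal canonical model, proved by induction along the sequence $E_0\supseteq E_1\supseteq\cdots$, with canonicity supplying the witnesses and minimality pinning their ranks down) is precisely the bridge used in \cite{AIJ15}, after which both directions reduce to definitional unwinding exactly as you describe. The one step you gloss over is that the completeness direction also needs the separate existence lemma that every consistent $K$ has at least one minimal canonical model (and a minor attribution quibble: the lower bound $k_{\emme}(C)\geq\rf(C)$ comes from $\emme$ merely satisfying $K$, while minimality is what is needed, together with canonicity, for the upper bound), but neither point changes the structure of the argument.
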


\hide{In \cite{AIJ15} the notion of rational closure is extended in order to deal with ABox. [MENZIONA ALGORITMO]
The semantic notions just provided are extended as follows in order to deal with ABox.
\begin{definition}[Canonical model of $K$ minimally satisfying ABox]\label{model-minimally-satisfying-Abox}
Given \\ $K$=(TBox,ABox), let $\emme = $$\langle \Delta, <, I \rangle$ and $\emme' =
\langle \Delta', <', I' \rangle$ be two canonical models of $K$ which are minimal with respect to TBox (Definition \ref{Preference between models wrt TBox}). We say that $\emme$ is preferred to $\emme'$ with respect to ABox, and we write $\emme <_{\mathit{ABox}} \emme'$, if, for all individual constants $a$ occurring in ABox, it holds that $k_{\emme}(a^I) \leq k_{\emme'}(a^{I'})$ and there is at least one individual constant $b$ occurring in ABox such that  $k_{\emme}(b^I) < k_{\emme'}(b^{I'})$.
\end{definition}

The following theorem (Theorem 12 in \cite{AIJ15})  shows that these last models correctly capture rational closure applied to ABox. 
\begin{theorem}[Semantic characterization of $\overline{\mathit{ABox}}$]\label{Semantic-rational-closure-Abox}
Given $K$=(TBox, ABox), for all  individual constant $a$ in ABox, we have that $C(a) \in$ $\overline{\mathit{ABox}}$
just in case $C(a)$ holds in all minimal canonical models of $K$ minimally satisfying ABox (Definition \ref{model-minimally-satisfying-Abox}).
\end{theorem}
} 

\section{Enriched Preferential Semantics}\label{sec:W-enriched}

The main weakness of rational closure, despite its power and its nice computational properties, is that it is an all-or-nothing mechanism that does not allow to separately reason on single {\em aspects}. As mentioned in the introduction, to overcome this difficulty, here we consider models with  several preference relations, one for each aspect we want to reason about. We assume an aspect can be  {\em any concept} occurring in $K$ 
  on the right end side of some typicality inclusion $\tip(C) \sqsubseteq A$:
we call ${\cal L_{A}}$ the set of these aspects.
Observe that $A$ may be non-atomic; it is an arbitrary non-extended concept. 
For each aspect $A\in {\cal L_{A}}$, the relation $<_A$ expresses the preference for aspect $A$  being true: $<_{Fly}$ expresses the preference for flying, so if it holds  that $\tip(Bird) \sqsubseteq Fly$, birds that do fly will be preferred  to birds that do not fly, with respect to aspect fly, i.e. with respect to $<_{Fly}$. 

Notice that the preferences with respect to aspects might be conflicting. It can be that, for instance,
$x$ is preferred to $y$ for aspect $A_i$ ($x <_{A_i} y$), whereas $y$ is preferred to $x$ for aspect $A_j$ ($y <_{A_j} x$). 
In the example of birds, we can have that $x <_{Fly} y$, whereas $y <_{HasNiceFeather} x$.

With this semantic richness we aim to obtain a strengthening of rational closure in which typicality with respect to every aspect is maximized.
Since we want to compare our approach to rational closure,  we keep the language the same as in $\alctr$. In particular, we only include a single  typicality operator $\tip$. However, the semantic richness could motivate the introduction of  several typicality  operators $\tip_{A_1} \dots \tip_{A_n}$ by which one could explicitly refer within the language to the typicality w.r.t. aspect $A_1$, or $A_2$, and so on. We leave this extension for future work. 


Let us now enrich the definition of an $\alctr$ model given above (Definition \ref{semalctr}) by taking into account preferences with respect to the aspects, as well as a {\em global} preference relation $<$.  
\begin{definition}[Enriched rational interpretation]\label{def-enrichedmodelR}
An enriched rational interpretation is a structure  $\emme = \langle \Delta, <_{A_1}, \ldots,<_{A_n},<, I \rangle$, where
$\Delta$ and $I$ are a domain and an interpretation function (as in Definition \ref{semalctr}), $<_{A_1}, \ldots,<_{A_n}, <$ are irreflexive, transitive, modular and well-founded preference relations over $\Delta$. 
Furthermore, $<$ satisfies the condition:  
\begin{quote}
(a) {\bf If} there is some $A_i$ such that $x <_{A_i} y$, and there is no $A_j$ such that $y <_{A_j} x$, 
{\bf then} $x < y$.
\end{quote}

Last, we let:
$min_<(S) = \{x \in S$ s.t. there is no $x_1 \in S$ s.t. $x_1 < x\}$ and $(\tip(C))^I = min_<(C^I)$.

\end{definition}
In the semantics above the global preference relation $<$ is related to  the various preference relations $<_{A_i}$, relative to single aspects $A_i$, 
 that we call {\em indexed} preference relations. Given condition (a), $x<y$ holds when   $x$ is preferred to $y$ for a single aspect $A_i$, and there is no aspect $A_j$ for which $y$ is preferred to $x$.
This allows to define preferences among elements having the same rank in the minimal canonical models of the rational closure.
As it will become clear, this brings us towards the direction of a refinement of the semantics of rational closure.

Let $min_{<_{A_i}}(S) = \{x \in S$ s.t. there is no $x_1 \in S$ s.t. $x_1 <_{A_i} x\}$.
In order to be a model of $K$, an enriched rational model must satisfy the following conditions.

\begin{definition}[Enriched rational models of K]\label{enriched-model-of-K}
Given a knowledge base $K=({\cal T}$,{\cal A}$)$,  an {\em enriched rational model} (or enriched model) for $K$ 
is an enriched interpretation $\emme = \langle \Delta, <_{A_1}, \ldots,<_{A_n}, <, I \rangle$ of $K$ which satisfies ${\cal T}$ and ${\cal A}$, where:

$- \; \emme$  satisfies the TBox ${\cal T}$ if 
\begin{itemize}
\item[(1)]
for all strict inclusions $C \sqsubseteq D \in {\cal T}$ 
(where $\tip$ does not occur in $C$), $C^I \subseteq {D}^I$;
\item[(2)]
for all typicality inclusions $\tip(C) \sqsubseteq A_i \in {\cal T}$,  
$min_<({C}^I) \subseteq {A_i}^I$;
\item[(3)]
for all typicality inclusions $\tip(C) \sqsubseteq A_i \in {\cal T}$,  
$min_{<_{A_i}}({C}^I) \subseteq {A_i}^I$.

\end{itemize}

$-\;\emme$ satisfies the ABox ${\cal A}$ if: 
(i) for all $C(a) \in {\cal A}$,  $a^I \in C^I$; (ii) for all $aRb \in {\cal A}$,  $(a^I,b^I) \in R^I$.

\end{definition}
By condition (3), the domain elements satisfying all the defeasible inclusions concerning aspect $A_i$ will be  preferred with respect to $<_{A_i}$ to those falsifying some of them.

We call $\alctre$ the description logic extending $\alc$ with typicality under the enriched semantics.
Logical entailment in $\alctre$  is defined as usual: a query $F$ (with form $C_L(a)$ or $C_L \sqsubseteq C_R$) is {\em logically entailed by $K$}  (written $K \models_{\alctre} F$)  if $F$ holds in all the enriched models of $K$.

The following example shows that, at least in some cases, condition $(a)$ allows to establish the expected preference between individuals.

\begin{example} \label{example-HasNiceFeather}
Let 
 {\small ${\cal T} = \{Penguin \sqsubseteq Bird$,  $\tip(Bird) \sqsubseteq HasNiceFeather$,  $\tip(Bird) \sqsubseteq Fly$,  $\tip(Penguin) \sqsubseteq \neg Fly \}$.
$\elle_A = \{HasNiceFeather, Fly,  \neg Fly, Bird, Penguin\}$}.  
We consider an $\alctre$ model $\emme$ of $K$, that we don't fully describe but which we only use to observe the behavior of two Penguins 
$x$, $y$ with respect to the properties of (not) flying and having nice feather. 
 In particular,  let us consider the three preference relations: $<, <_{Fly}, <_{\neg Fly}, <_{HasNiceFeather}$. 

Suppose  $x <_{HasNiceFeather} y$ (because $x$, as all typical birds, has a nice feather whereas $y$ does not)  and there is no other aspect $A_i$ such that $y <_{A_i} x$, and in particular it does neither hold that  $y <_{\neg Fly} x$ (because for instance, as all typical penguins, both $x$ and $y$ do not fly), nor that $y <_{Fly} x$. In this case, obviously it holds that $x < y$, since condition (a) in Definition \ref{def-enrichedmodelR} is satisfied.

\end{example}

However, the enriched semantics does not provide a refinement of the rational closure.
\begin{example} \label{example-S-enriched}
Let us compare the domain element  $x \in (Bird \sqcap  Penguin \sqcap \neg Fly \sqcap \neg HasNiceFeather)^I$ in a model $\emme$ with a domain element $y \in (Bird \sqcap  Penguin \sqcap Fly \sqcap HasNiceFeather)^I$.
We have that $y <_{Fly} x$,  $y <_{HasNiceFeather} x$ and $x <_{\neg Fly} y$. Hence, condition (a) cannot help to conclude anything about the global relation $<$ concerning $x$ and $y$ (and, in particular, we cannot conclude $x<y$).
However, in all the models of the rational closure, we would prefer $x$ to $y$.
\end{example}
Observe that, in this last example, $x$ violates the defeasible properties of Birds of flying and having a nice feather, while $y$ violates the more specific defeasible property of Penguin of not flying. 


\subsection{S-Enriched models}\label{sec:S-enriched}

In order to deal with cases as Example \ref{example-S-enriched} and in order to obtain a strengthening of rational closure,
we strengthen the definition of enriched model, by introducing an additional condition beside condition (a).
In particular, we define a subset of enriched models, that we call {\em strongly enriched} (S-enriched) models, as they enforce the respect for ``specificity" also in cases when enriched models do not.
In addition to the constraints linking the global preference relation $<$ to the indexed preference relations $<_{A_1} \dots <_{A_n}$, which leads to preferring (with respect to the global $<$) the individuals that are minimal with respect to the aspects $A_i$,
we add a further constraint which leads to prefer the individuals violating defeasible properties of less specific concepts with respect to individuals violating defeasible properties of more specific concepts.
It turns out that this leads to a stronger semantics, 
which is able to capture wanted inferences, such as those in Example \ref{example-S-enriched}, and which provides a strengthening of rational closure semantics in Section \ref{sez:semantica}.

 In order to define S-enriched models of a knowledge base $K$, we strengthen the definition of satisfiability of a TBox as follows.
 
 \begin{definition}[S-enriched rational models of K]\label{def-SenrichedmodelR}
Given a knowledge base $K=({\cal T}, {\cal A})$, an enriched interpretation $\emme = \langle \Delta, <_{A_1}, \ldots,<_{A_n}, <, I \rangle$  is an S-enriched rational model for $K$  if $\emme$ satisfies the TBox ${\cal T}$ and the ABox ${\cal A}$, where:

$ \emme$  satisfies ${\cal T}$ if 
\begin{itemize}
\item[(1)]
for all strict inclusions $C \sqsubseteq D \in {\cal T}$ 
(i.e., $\tip$ does not occur in $C$), $C^I \subseteq {D}^I$;
\item[(2)]
for all typicality inclusions $\tip(C) \sqsubseteq A_i \in {\cal T}$,  
$min_<({C}^I) \subseteq {A_i}^I$;
\item[(3)]
for all typicality inclusions $\tip(C) \sqsubseteq A_i \in {\cal T}$,  
$min_{<_{A_i}}({C}^I) \subseteq {A_i}^I$.

\item[(4)]
{\bf If}  there is $\tip(C_i) \sqsubseteq A_i \in {\cal T}$ s.t. $x <_{A_i} y$ and  $y \in C_i^I$
  and,\\
 for all $\tip(C_j) \sqsubseteq A_j \in {\cal T}$ s.t.  $y <_{A_j} x$ and  $x \in C_j^I$, 
there is $\tip(C_k) \sqsubseteq A_k \in {\cal T}$ s.t.    $x <_{A_k} y$, \  $y \in C_k^I$, 
and $k_{\emme}(C_j) < k_{\emme}(C_k)$, \\
{\bf then} $x < y$.

\end{itemize}

$\emme$ satisfies {\cal A}  if: 
(i) for all $C(a) \in {\cal A}$,  $a^I \in C^I$, (ii) for all $aRb \in {\cal A}$,  $(a^I,b^I) \in R^I$.

\end{definition}
We call $\alctrse$ the logic based on the semantics of S-enriched models and we define logical entailment in $\alctrse$ as usual: a query $F$  is {\em logically entailed by $K$  in $\alctrse$} (written $K \models_{\alctrse} F$)  if $F$ holds in all the S-enriched models of $K$.

We call condition $(4)$ ``specificity condition", as it captures the idea that, in case two individuals are preferred one another with respect to different aspects, preference (with respect to the global preference relation $<$) should be given to the individual that falsifies typical properties of concepts with lower ranks.
Violating a default property of a less specific concept $C_j$ is less serious than violating a default property of a more specific concept $C_k$.

%

%
The idea is that, in S-enriched models, $<$ provides a strengthening of the preference relation in ranked models of rational closure. 
In particular, further preferences are determined among the elements having the same rank in the models of rational closure, both by exploiting the preference relations $<_{A_i}$ with respect to single aspects (condition $(a)$), and by exploiting the specificity criterium (condition $(4)$). 
Observe that  $(4)$ is only a sufficient condition for $x<y$. It is not required to be a necessary condition, 
and  additional  pairs $x'<y'$ might be needed for $<$ to satisfy modularity.


The above semantics allows us to model a form of inheritance in which  the defeasible properties of concepts (classes) are inherited by more specific concepts, unless they are overridden by the properties of more specific ones. Also, the overriding of some defeasible property of a concept should not cause the overriding of all the defeasible properties of that concept, and the inheritance of a more specific property should win over the inheritance of a less specific one.
These criteria are incorporated among the desirable principles considered by Lehmann in \cite{Lehmann95} namely, the ``presumption of typicality", the ``presumption of independence",  ``priority to typicality" and ``respect of specificity", which underly the lexicographic closure definition.
Similar criteria are also at the basis of the nonmonotonic description logic ${\cal DL}^N$ \cite{bonattiAIJ15}, whose definition explicitly uses the notion of overriding. We will provide a detailed comparison with the lexicographic closure and with ${\cal DL}^N$ in Section  \ref{sec:Lexicographic}.

With reference to Example \ref{example-S-enriched}, we can see that, with this notion of S-enriched semantics, 
we are able to give preference to a domain element $x$ which is a penguin that does not fly and has not nice feathers (thus violating the defeasible property
$\tip(Bird) \sqsubseteq HasNiceFeather$ of birds)
with respect to an element $y$
corresponding to penguin which has not nice feathers  but flies, violating the more specific property $\tip(Penguin) \sqsubseteq \neg Fly$ of penguins.


For $\alctrse$ we can prove
the following theorem, showing the relations between $\alctrse$ and $\alctr$ (the extension of $\alc$ with the typicality operator defined in Section  \ref{sez:semantica}). It is a an immediate consequence of the fact that a S-enriched model is a $\alctr$ model.
 \begin{theorem}\label{equivalence-multiple2}
If $K \models_{\alctr} F$ then also $K \models_{\alctrse} F$. If $\tip$ does not occur in $F$ the other direction also holds: If $K \models_{\alctrse} F$ then also $K \models_{\alctr} F$.
 \end{theorem}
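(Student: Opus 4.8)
The plan is to prove both implications by pairing each S-enriched model with an ordinary $\alctr$ model having the same reduct $\langle \Delta, <, I \rangle$. The crucial observation is that the extension of every concept, and in particular of a typicality concept, is computed as $(\tip(C))^{I} = min_<(C^I)$, so it depends only on $\Delta$, the global relation $<$, and $I$, never on the indexed relations $<_{A_i}$. Consequently, two interpretations that agree on $\langle \Delta, <, I \rangle$ assign the same extension to every concept and verify exactly the same queries.

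For the first implication I would take an arbitrary S-enriched model $\emme = \langle \Delta, <_{A_1}, \ldots, <_{A_n}, <, I \rangle$ of $K$ and discard its indexed relations, obtaining $\emme^- = \langle \Delta, <, I \rangle$. This $\emme^-$ is an $\alctr$ model of $K$: by Definition \ref{def-enrichedmodelR} the relation $<$ is irreflexive, transitive, modular and well-founded; strict inclusions hold by clause (1); each typicality inclusion $\tip(C) \sqsubseteq A_i$ holds because clause (2) states precisely $min_<(C^I) \subseteq A_i^I$, i.e. $(\tip(C))^I \subseteq A_i^I$; and the ABox clauses are identical in the two definitions. From $K \models_{\alctr} F$ we get that $F$ holds in $\emme^-$, hence in $\emme$ since the two share $\langle \Delta, <, I \rangle$. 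As $\emme$ was arbitrary, $K \models_{\alctrse} F$; notice this half needs no hypothesis on $F$.

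For the converse, assuming $\tip$ does not occur in $F$, I would argue in the opposite direction: start from any $\alctr$ model $\langle \Delta, <, I \rangle$ of $K$ and enrich it by letting every indexed relation $<_{A_i}$ coincide with the global $<$. The task is to check that this is an S-enriched model in the sense of Definition \ref{def-SenrichedmodelR}. Clauses (1), (2) and the ABox hold because the underlying $\alctr$ model satisfies $K$; clause (3) reduces to clause (2) once $<_{A_i}$ is $<$; and conditions (a) and (4) are satisfied trivially, since each of their premises already contains a hypothesis $x <_{A_i} y$, that is $x < y$, which is exactly the required conclusion. Having built an S-enriched model with the same interpretation function $I$, I would invoke $K \models_{\alctrse} F$ to place $F$ in this model, and, $F$ being $\tip$-free, its truth rests only on $I$; hence $F$ holds in the original $\alctr$ model, and since that model was arbitrary, $K \models_{\alctr} F$.

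The clause-by-clause verifications that the reduct and the enrichment are genuine models of $K$ are routine. The single delicate point, and the main obstacle, is making sure that conditions (a) and (4) survive the identification of each $<_{A_i}$ with $<$: their consequents demand $x < y$, and I must argue this is already guaranteed by the premises. For (4) this additionally requires discharging the inner ``for all $\tip(C_j) \sqsubseteq A_j$ such that $y <_{A_j} x$'' quantifier, which is vacuous because $y <_{A_j} x$ would mean $y < x$, contradicting $x < y$ by the asymmetry of $<$ (a consequence of its irreflexivity and transitivity). Once this is observed, both directions close immediately.
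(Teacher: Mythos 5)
Your proof is correct and follows essentially the same route as the paper: the first direction uses the fact that (the reduct of) an S-enriched model is an $\alctr$ model, and the second enriches an arbitrary $\alctr$ model by setting every $<_{A_i}$ equal to $<$, checking that conditions (a) and (1)--(4) then hold. Your explicit observation that the premises of (a) and (4) already contain some $x <_{A_i} y$, i.e.\ $x<y$, is exactly the detail the paper leaves as ``easy to see''.
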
 
The theorem is a an immediate consequence of the fact that a S-enriched model is a $\alctr$ model. By contraposition,
from the hypothesis that $K \not \models_{\alctrse} F$, there is an S-enriched model $\emme$ satisfying $K$ and falsifying $F$.
Since $\emme$ is also an $\alctr$ model of $K$, it follows that $K \not \models_{\alctr} F$.
For the second part, observe that, by contraposition, if $K \not \models_{\alctr} F$, then there is an $\alctr$ model $\emme= \langle \Delta,<, I \rangle$ of $K$ falsifying $F$. We can define an S-enriched model of $K$, $\emme'=  \langle \Delta, <_{A_1}, \ldots, <_{A_n}, <, I \rangle$,
by letting, for all $i=1,\ldots,n$, $<_{A_i}= <$. It is easy to see that $\emme'$ satisfies condition (a) in Definition \ref{def-enrichedmodelR}
as well as conditions (1)-(4) in Definition \ref{def-SenrichedmodelR} and, hence, it is an S-enriched model of $K$ which falsifies $F$.

\section{Minimal S-enriched models and their relation with rational closure} \label{sec:minimal-S-enriched-models}

As in the semantic characterization of the rational closure in Section \ref{sez:semantica}, we restrict our consideration to {\em minimal canonical models} of the KB.
We define minimal S-enriched models by first minimizing the rank of each domain element with respect to the  indexed preference relations $<_{A_i}$'s, and then by minimizing the rank of the elements with respect to the global preference relation $<$. 
Let $k_{{\emme,A_i}}(x)$ be the rank of a domain element $x$ of the model $\emme$ with respect to the indexed relation $<_{A_i}$.
%

\begin{definition}[Minimal S-enriched models of $K$ (with respect to $TBox$)]\label{minimal_enriched_models_new} 
Given two S-enriched models $\emme = $$\langle \Delta, <_{A_1}, \ldots,<_{A_n}, <, I \rangle$ and $\emme' =
\langle \Delta', <'_{A_1}, \ldots,<'_{A_n}, <', I' \rangle$, 
\begin{itemize}
\item
 $\emme'$ is preferred to
$\emme$  {\em w.r.t. the aspects} (and write $\emme' \prec_{Aspects} \emme$) if 
$\Delta = \Delta'$, $I = I'$, and:
\begin{itemize}
\item  for all $x \in \Delta$, $ k_{{\emme',A_i}}(x) \leq k_{{\emme,A_i}}(x)$;
\item for some $y \in \Delta$, $ k_{{\emme',A_i}}(y) < k_{{\emme,A_i}}(y)$  
\end{itemize} 

\item  $\emme'$ is preferred to $\emme$ {\em  w.r.t. the global preference relation $<$ } (and write $\emme' \prec_{global} \emme$)
 if $\Delta = \Delta'$, $I = I'$, and
 \begin{itemize}
\item   for all $x \in \Delta$, $ k_{{\emme'}}(x) \leq k_{{\emme}}(x)$;
\item for some $y \in \Delta$ ,  $ k_{{\emme'}}(y) < k_{{\emme}}(y)$ 
\end{itemize}  
\end{itemize}

We combine the two preference relations in the {\em lexicographic order}:
 We  say that  {\em $\emme'$ is preferred to $\emme$} (and write $\emme' \prec \emme$)
 if $\Delta = \Delta'$, $I = I'$, and
  \begin{itemize}
  \item either  $\emme' \prec_{Aspects} \emme$ or
  \item    $\emme \not \prec_{Aspects} \emme'$
  and $\emme' \prec_{global} \emme$.
\end{itemize}   

Given a knowledge base $K = \langle {\cal T}, {\cal A} \rangle$, we say that
$\emme$ is a {\em minimal S-enriched model of $K$} (with respect to TBox)  if it is an S-enriched model of $K$ and  there is no model
$\emme'$ satisfying $K$ such that $\emme' \prec \emme$.

\end{definition}
%
As the definition of the global preference $<$ depends on the indexed preferences $A_i$, we first minimize with respects to the aspects $A_i$ 
and, then, with respect to the  global preference $<$. 

In minimal models, each preference relation $<_{A_i}$  
ranks the domain elements into two levels: 
the domain elements $x$ satisfying all the defeasible inclusion concerning aspect $A_i$ (having rank $k_{\emme,A_i}(x)=0$), and the domain elements $y$ falsifying some defeasible inclusion concerning aspect $A_i$ (having rank $k_{\emme,A_i}(y)>0$).  This is similar to the interpretation given by Lehmann to single defaults in \cite{Lehmann95}). \normalcolor

Let us restrict our attention to minimal S-enriched  models which are {\em canonical}. 
\begin{definition}[Minimal canonical S-enriched models of K]\label{Def-minimal-enriched-canonical}
A {\em minimal canonical S-enriched model  $\emme$ of $K$ }
is an S-enriched model of $K$, which is  minimal (with respect to Definition \ref{minimal_enriched_models_new})
and it is canonical, i.e., for each set of (non-extended) concepts $\{C_1, C_2, \dots,$ $ C_n\}$\hide{ \subseteq \lingconc$} s.t. $K \not\models_{\alctrse} C_1 \sqcap C_2 \sqcap \dots \sqcap C_n \sqsubseteq \bot$, there exists (at least) a domain element $x$ such that
$x \in (C_1 \sqcap C_2 \sqcap \dots \sqcap C_n)^I$. 
\end{definition}

In the following we will write: $K {\small \models^{min}_{\alctrse}}  C \sqsubseteq D$ to mean that $C \sqsubseteq D$ holds in all {\em minimal canonical} S-enriched models of $K$.

The following example shows that this semantics allows us to correctly deal with the wanted inferences.
and, in particular, that inheritance of defeasible properties, when not overridden for more specific concepts, applies to concepts of all ranks.

\begin{example} \label{exa:BabyPenguin} 
Consider a knowledge base $K$=({\cal T} ,{\cal A}), where ${\cal A}=\emptyset$ and  ${\cal T}$ contains the following inclusions: 

$\mathit{\tip(Bird) \sqsubseteq  Fly}$

$\mathit{Penguin \sqsubseteq Bird}$

$\mathit{\tip(Penguin) \sqsubseteq \neg Fly}$

$\mathit{\tip(Penguin) \sqsubseteq BlackFeather}$ 

$\mathit{BabyPenguin \sqsubseteq Penguin}$

$\mathit{\tip(BabyPenguin)}$ $ \sqsubseteq$ $\mathit{ \neg BlackFeather}$.

\noindent
As we have seen from Example \ref{example-HasNiceFeather}, the defeasible property of birds having a nice feather is inherited by typical penguins,  even though penguins are exceptional birds regarding flying.
Here, we also expect that typical baby penguins inherit the defeasible property of penguins that they do not fly (by presumption of independence \cite{Lehmann95}), although the defeasible property $\mathit{BlackFeather}$ is instead overridden for typical baby penguins. 

Consider two domain elements $z$ and $w$ which are both baby penguins and have a non black feather.
Suppose that $z$ flies and $w$ doesn't. Then $z$ violates the defeasible property that penguins typically do not fly,  while $w$ violates the defeasible property   that birds typically fly. As $z<_{Fly} w$ and $w <_{\neg Fly} z$, condition $(a)$ neither allows to conclude  $w<z$, nor $z<w$. However, $z$ violates a more specific defeasible property than $w$ and, hence, by the specificity condition $(4)$ of S-enriched models in Definition \ref{def-SenrichedmodelR}, we can conclude that $w < z$ holds. Indeed, the S-enriched minimal model semantics allows us to conclude that $\tip(BabyPenguin) \sqsubseteq \neg Fly$,
as wanted.
\end{example}

\normalcolor

We have developed the semantics above in order to overcome a weakness of rational closure, namely its all-or-nothing character. 
In order to show that the semantics hits the point, 
we prove that the semantics of minimal canonical S-enriched models is a refinement of the semantics of rational closure, i.e. that minimal entailment in $\alctrse$ strengthens  reasoning under the rational closure. 
\normalcolor


\begin{theorem}\label{stronger_semantics} 
Let $K=(TBox, ABox)$ be a knowledge base. If $C \sqsubseteq D \in \overline{TBox}$ then $K {\small \models^{min}_{\alctrse}} C \sqsubseteq D$.
 \end{theorem}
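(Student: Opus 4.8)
The plan is to split $\overline{TBox}$ according to the two forms of its members and treat each separately, the typicality inclusions being the substantial case.

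\emph{Strict inclusions.} If $C \sqsubseteq D \in \overline{TBox}$ with $\tip$ not occurring in $C$, then $K \models_{\alctr} C \sqsubseteq D$ by Definition \ref{def:rational closureDL}. Every S-enriched model is in particular an $\alctr$ model, since its reduct $\langle \Delta, <, I \rangle$ satisfies conditions (1) and (2) of Definition \ref{def-SenrichedmodelR}, which are exactly the $\alctr$ satisfaction clauses (cf.\ the argument for Theorem \ref{equivalence-multiple2}). Hence $C \sqsubseteq D$ holds in all S-enriched models of $K$, and a fortiori in the minimal canonical ones, so $K \models^{min}_{\alctrse} C \sqsubseteq D$.

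\emph{Typicality inclusions.} Let $\tip(C) \sqsubseteq D \in \overline{TBox}$ and fix an arbitrary minimal canonical S-enriched model $\emme = \langle \Delta, <_{A_1}, \ldots, <_{A_n}, <, I \rangle$; I must show $min_<(C^I) \subseteq D^I$. If $\rf(C) = \infty$ then $K \models_{\alctr} C \sqsubseteq \bot$ (otherwise canonicity of the minimal canonical $\alctr$ model would place a $C$-element at a finite rank), so $C^I = \emptyset$ and the inclusion holds vacuously. So assume $\rf(C) < \rf(C \sqcap \neg D)$. Write $\rho(x)$ for the rank of $x$ in the rational closure, i.e.\ its rank in the minimal canonical $\alctr$ model with the same domain and interpretation (well defined, since $I$ fixes the $\alctr$-type of each element). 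Two ingredients then close the case. First, the reduct $\langle \Delta, <, I\rangle$ is again a canonical $\alctr$ model of $K$ — canonicity coincides for the two logics on $\tip$-free concept sets by Theorem \ref{equivalence-multiple2} — so by the rank-minimality of the rational-closure model \cite{AIJ15} one has $k_\emme(x) \ge \rho(x)$ for all $x$, and $\rf(C) = \min\{\rho(z) : z \in C^I\}$ by canonicity. Second, and crucially, I need the bridging claim that every $<$-minimal element of $C^I$ has rational-closure rank $\rf(C)$. Granting it, take $x \in min_<(C^I)$; then $\rho(x) = \rf(C) < \rf(C \sqcap \neg D)$, so $x \notin (C \sqcap \neg D)^I$, i.e.\ $x \in D^I$, as required.

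The bridging claim is the heart of the proof and the step I expect to be the main obstacle. Concretely it suffices to show that for every $x \in (C\sqcap \neg D)^I$ there is $z \in C^I$ with $z < x$: since such $x$ have $\rho(x) \ge \rf(C\sqcap\neg D) > \rf(C)$, it is enough to exhibit a $C$-witness of rank $\rf(C)$ that the refinement keeps strictly below $x$. I would prove this by combining the lower bound $k_\emme \ge \rho$ with an analysis of how the only two sources of new preferences, conditions (a) and (4), interact with $\rho$: condition (a) forces $x' < y'$ only when the set of aspects (hence defeasible inclusions) violated by $x'$ is properly contained in that violated by $y'$, while condition (4) forces $x' < y'$ only when each inclusion distinguishing $x'$ is dominated by a strictly more specific (higher $\rf$-antecedent) inclusion violated by $y'$; in both cases the favoured element is at most as exceptional, so no preference is created that raises a minimal-rank $C$-witness to or above $x$. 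Making this rigorous requires the characterization of $\rho$ in terms of the ranks of the defeasible inclusions an element violates — in particular that an element of rank $\rho(x)$ must violate an inclusion at every lower level — together with a check that the lexicographic minimization of Definition \ref{minimal_enriched_models_new} adds, beyond the pairs forced by (a) and (4) and modularity, nothing that inverts the rational-closure order on the relevant witnesses. This translation between the combinatorics of violated defaults and $\rho$ is the delicate technical core.
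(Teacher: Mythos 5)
Your reduction is the right one, and it is essentially the paper's: the strict case is immediate, the $\rf(C)=\infty$ case is vacuous, and everything comes down to what you call the bridging claim, namely that in a minimal canonical S-enriched model every $<$-minimal $C$-element has rational-closure rank $\rf(C)$ (equivalently, that $z <_{RC} x$ in the minimal canonical $\alctr$ model implies $z < x$ in $\emme$). But you leave exactly that claim unproved, and the sketch you offer for it points in the wrong direction. You argue that conditions (a) and (4) only ever prefer a ``less exceptional'' element to a more exceptional one, so that no preference is created that inverts the rational-closure order. That is a soundness check on the new preferences; it can only tell you that $<$ does no damage. What the theorem needs is the opposite, positive fact: that condition (4) is \emph{forced to fire} for every pair $z <_{RC} x$, so that the required dominating pairs $z < x$ are actually present in $\emme$. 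Your two ingredients ($k_\emme \ge \rho$ pointwise and $\rf(C)=\min\{\rho(z): z\in C^I\}$) are consistent with a model in which a rank-$\rf(C)$ witness and a higher-rank element of $(C\sqcap\neg D)^I$ are $<$-incomparable, and then $min_<(C^I) \not\subseteq D^I$ may still fail to be excluded.

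The paper closes this gap by induction on $i = k_{\emme_{RC}}(z)$ for a pair $z <_{RC} x$. From the characterization of ranks in the minimal canonical $\alctr$ model one extracts an inclusion $\tip(B_i)\sqsubseteq A_i \in E_i\setminus E_{i+1}$ satisfied by $z$ and violated by $x$, whence $z <_{A_i} x$ by the minimization with respect to aspects; then, for every $\tip(B_l)\sqsubseteq A_l$ with $x <_{A_l} z$ and $z \in B_l^I$, the fact that $z$ violates it yields some $z' <_{RC} z$ with $z' \in B_l^I$ and $k_{\emme_{RC}}(z') < i$, so the inductive hypothesis gives $z' < z$ in $\emme$ and hence $k_\emme(B_l) \le k_\emme(z') < k_\emme(B_i)$; this is precisely the antecedent of condition (4) of Definition \ref{def-SenrichedmodelR}, which then delivers $z < x$. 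Until you supply this (or an equivalent) inductive argument, your proof establishes only the strict-inclusion half of the statement.
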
 
 \begin{proof} 
%
%
By contraposition suppose that $K \not\models^{min}_{\alctrse} C \sqsubseteq D$. 
Then there is a minimal canonical S-enriched  model  $\emme =  \langle \Delta,  <_{A_1}, \ldots,<_{A_n}, <, I  \rangle$  of $K$ and an $y \in C^I$ such that $y \not\in D^I$. 
All sets of concepts consistent with $K$ w.r.t. $\alctrse$ are also consistent with $K$ with respect to $\alctr$, and viceversa (by Theorem \ref{equivalence-multiple2}). 
 By definition of {\em canonical}, $\emme'= \langle {\Delta},  <, I  \rangle$ is a canonical $\alctr$ model of $K$ according to Definition \ref{semalctr} in Section \ref{sez:semantica}. Also, there must be a minimal canonical model of $K$ obtained from $\emme'$ by possibly lowering the ranks of domain elements. Let $\emme_{RC} =  \langle {\Delta},  <_{RC}, I  \rangle$ be such a model.

If $C$ does not contain the $\tip$ operator, we are done: in $\emme_{RC}$, as in $\emme$, there is $y \in C^{I}$ such that $y \not\in D^{I}$, hence $C \sqsubseteq D$ does not hold in  $\emme_{RC}$, and $C \sqsubseteq D \not\in \overline{TBox}$.
If $\tip$ occurs in $C$, and $C = \tip(C')$, we still need to show that also in $\emme_{RC}$, as in $\emme$,  $y \in (\tip(C))^I$, i.e.
$y \in min_{<_{RC}}({C'}^I)$.  We prove this by showing that for all $x,y \in \Delta$ if $x <_{RC} y$ in $\emme_{RC}$, then also 
$x <y$ in $\emme$. 
%
%
The proof is by induction on  $k_{\emme_{RC}}(x)$.
 
For the base case, let $k_{\emme_{RC}}(x)= 0$ and $k_{\emme_{RC}}(y)>0$. Since $x$ does not violate any inclusion, also in $\emme$ (by minimality of $\emme$)  $k_{\emme}(x)= 0$. This cannot hold for $y$, for which $k_{\emme}(y)> 0$ (otherwise $\emme$ would violate 
$K$, against the hypothesis). Hence $x < y$ holds in $\emme$.

For the inductive case, let $k_{\emme_{RC}}(x)= i < k_{\emme_{RC}}(y)$, i.e. $x<_{RC} y$.
As $x <_{RC} y$ in $\emme_{RC}$ and the rank of $x$ in $\emme_{RC}$ is $i$,
there must be a $\tip(B_i) \sqsubseteq A_i \in E_i - E_{i+1}$ such that $x \in (\neg B_i \sqcup A_i)^I$ whereas $y \in (B_i \sqcap \neg A_i)^I$ in $\emme_{RC}$, so that  $x<_{A_i} y$ holds in the minimal S-enriched model $\emme$. 

Before we proceed let us notice that by definition of $E_i$ in Section \ref{sez:semantica}, as well as by what stated just above on the relation between rank of a concept and $k_{\emme_{RC}}$, $k_{\emme_{RC}}(B_i)= k_{\emme_{RC}}(x)$. We will use this fact below.
%
%
We show that, for any inclusion $\tip(B_l) \sqsubseteq A_l \in K$  such that $y <_{A_l} x$ and $x \in B_l$,
it holds that $k_{\emme}(B_l) < k_{\emme}(B_i)$, so that, by (4), $x<y$.

Let $\tip(B_l) \sqsubseteq A_l \in K$
such that $y <_{A_l} x$ and $x \in B_l$.
As $\emme$ is a minimal model, $\tip(B_l) \sqsubseteq A_l \in K$ is
 violated by $x$, i.e.  $x \in (B_l \sqcap \neg A_l)^I$.
Since $\emme_{RC}$ satisfies $K$,  $x$ cannot be a typical $B_l$-element and  there must be $x' <_{RC} x$ in $\emme_{RC}$ with $x' \in (B_l)^I$. 
As $k_{\emme_{RC}}(x')< i$, 
by inductive hypothesis,  $x' < x$ in $\emme$. 
As $x' \in {B_l}^I$, $k_{\emme}(B_l) \leq k_{\emme}(x')$. 
Since it can be shown that $k_{\emme}(x') < k_{\emme}(B_i) $, $k_{\emme}(B_l) < k_{\emme}(B_i)$, and by condition (4), it holds that $x < y$ in $\emme$. 

%
With these facts,  since $y \in min_<({C'}^I)$  holds in $\emme$, also $y \in min_{<_{RC}}({C'}^{I})$ in $\emme_{RC}$, hence $\tip(C') \sqsubseteq D$ does not hold in $\emme_{RC}$, and $C \sqsubseteq D = \tip(C') \sqsubseteq D \not\in \overline{TBox}$.

The theorem follows by contraposition.
\hfill \qed
\end{proof}

Observe that, in the proof of Theorem \ref{stronger_semantics}, we have not used condition (a) of Definition \ref{enriched-model-of-K}.
Indeed, we can show that the specificity condition $(4)$ in minimal S-enriched models (Definition \ref{def-SenrichedmodelR}) subsumes condition (a), dealing with multiple aspects. 
Let us consider a simplified notion of S-enriched model in which condition $(a)$ is omitted. 

\begin{proposition}\label{elimino(a)}
Let $\emme= \langle \Delta,  <_{A_1}, \ldots,<_{A_n}, <, I \rangle$ be a minimal simplified S-enriched model of $K$. We can show that if condition 
(4) holds, then condition (a) holds as well.
\end{proposition}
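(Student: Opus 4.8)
The plan is to show directly that the antecedent of condition (a) forces the antecedent of the specificity condition (4), so that (4) itself delivers the conclusion $x < y$ demanded by (a). Concretely, I assume the hypotheses of (a): there is an aspect $A_i$ with $x <_{A_i} y$, and there is \emph{no} aspect $A_j$ with $y <_{A_j} x$. I then aim to produce an inclusion $\tip(C_i) \sqsubseteq A_i \in {\cal T}$ witnessing the existential part of the premise of (4), and to argue that the universal part of that premise holds vacuously.

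First I would exploit the two-level structure of the indexed relations in a minimal model, as recorded after Definition \ref{minimal_enriched_models_new}: in a minimal model each $<_{A_i}$ splits $\Delta$ into the elements of rank $0$ (those satisfying every inclusion $\tip(C) \sqsubseteq A_i$, i.e.\ lying in $(\neg C \sqcup A_i)^I$ for all such $C$) and the elements of positive rank (those falsifying some such inclusion). Hence from $x <_{A_i} y$ I obtain $k_{{\emme,A_i}}(x)=0 < k_{{\emme,A_i}}(y)$, so $y$ falsifies some inclusion $\tip(C_i) \sqsubseteq A_i \in {\cal T}$; that is, $y \in (C_i \sqcap \neg A_i)^I$ and in particular $y \in C_i^I$. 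This inclusion, together with the given $x <_{A_i} y$, is exactly the witness required by the existential part of the premise of (4).

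Next I would observe that the universal part of the premise of (4) is satisfied vacuously. That part quantifies over all inclusions $\tip(C_j) \sqsubseteq A_j \in {\cal T}$ with $y <_{A_j} x$ and $x \in C_j^I$; but the second hypothesis of (a) asserts that there is no aspect $A_j$ at all with $y <_{A_j} x$, so the range of this quantifier is empty and the implication holds trivially. With both parts of the premise of (4) established, condition (4) yields $x < y$, which is precisely the conclusion of (a), and the argument is complete.

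I do not expect a genuine obstacle: the proof is essentially a matching of quantifier patterns, and the only nonroutine ingredient is the two-level characterisation of $<_{A_i}$ in minimal models, which justifies passing from $x <_{A_i} y$ to the existence of a falsified inclusion $\tip(C_i) \sqsubseteq A_i$ with $y \in C_i^I$. The crux that makes (4) strong enough to subsume (a) is exactly that the absence of any $A_j$ with $y <_{A_j} x$ collapses the universally quantified specificity comparison in (4) to a vacuous statement.
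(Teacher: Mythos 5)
Your proof is correct and follows essentially the same route as the paper's: both use minimality (the two-level structure of the indexed relations) to extract from $x <_{A_i} y$ a defeasible inclusion $\tip(C_i) \sqsubseteq A_i$ satisfied by $x$ and violated by $y$ (so $y \in C_i^I$), which witnesses the existential part of the premise of (4), and both observe that the universal part of that premise is vacuous because no $A_j$ with $y <_{A_j} x$ exists. No gaps.
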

\begin{proof}
To see that condition (4) implies condition (a), suppose that the precondition of $(a)$ holds, i.e., that
there is some $A_i$ such that $x <_{A_i} y$ in $\emme$, and there is no $A_j$ such that $y <_{A_j} x$.
We show that $x<y$ follows using condition (4).

As $x <_{A_i} y$ and the model $\emme$ is minimal, in particular, it is minimal with respect to the aspects and there must be a defeasible inclusion 
$\tip(C) \sqsubseteq A_i \in K$ s.t.  $x$ satisfies it  ($x  \in (\neg C \sqcup  A_i)^I$), and $y$ violates it ($y \in (C \sqcap \neg A_i)^I$).
Additionally, for all $A_j$ ($i\neq j$), $y \not <_{A_j} x$, that is, all the defeasible inclusions satisfied by $y$ are also satisfied by $x$.
Therefore, the antecedent of condition (4), the ``If part", holds as 
there is no inclusion which is falsified by $x$ and satisfied by $y$. Hence, by condition (4), $x<y$ follows.
\hfill \qed
\end{proof}
\normalcolor

\section{The multipreference-closure} \label{sec:closure}

As the minimal S-enriched semantics is a strengthening of the rational closure semantics, 
in this section we build on the rational closure to define a new notion of closure, that we call the {\em multipreference-closure} 
(MP-closure, for short).
We show that the MP-closure provides a sound approximation of the minimal S-enriched semantics: 
reasoning in the MP-closure will allow to derive sound conclusions with respect to the minimal canonical S-enriched models semantics.
The MP-closure can be regarded as a variant of the lexicographic closure \cite{Lehmann95} and
we compare the MP-closure  with the lexicographic closure and with the nonmonotonic description logic ${\cal DL}^N$\cite{bonattiAIJ15}.


According to condition $(4)$ in Definition \ref{def-SenrichedmodelR}, the rank of concepts in a S-enriched model is used to determine the specificity of typicality inclusions and, thus, to determine the preference relation $<$ among domain elements. 
As the ranking of concepts in the rational closure approximates the ranking of concepts in minimal canonical S-enriched models of the KB,
it can be used for determining the specificity of typicality inclusions in a closure thus providing a sound approximation of minimal entailment
in the S-enriched semantics. 

In particular, if $\rf(B)$ is the rank of a concept $B$ in the rational closure,
the most preferred $B$-elements in minimal canonical S-enriched models must be among the $B$-elements with rank $\rf(B)$.
According to condition $(4)$ we prefer a $B$-element $x$ with rank $\rf(B)$ to another one $y$ with the same rank, if 
for all the defeasible inclusions falsified by $x$ and not by $y$ there is a more specific defeasible inclusion falsified by $y$ and not by $x$.
In essence, we need to identify those $B$-elements with rank $\rf(B)$ which satisfy a maximal subset of defeasible inclusions, containing inclusions being as specific as possible 
(something which is very similar to what the lexicographic closure construction \cite{Lehmann95} does).

In the following, we provide a construction (the MP-closure) to check the entailment of a subsumption query $\tip(B) \sqsubseteq D$
from a TBox 
%
and we show that the logical consequences under the MP-closure are sound with respect to the S-enriched semantics.
Given a TBox ${\cal T}$, we compute the sequence of TBoxes
 $E_0,E_1, \ldots, E_n$ according to the rational closure construction in Section \ref{sez:semantica}.
 We let  $\delta(E_i)$ be the set of typicality inclusions contained in $E_i$ (i.e. those defeasible inclusions with rank $\geq i$) and
 let $D_i= ,\delta(E_i) -\delta(E_{i+1})$  be the set of typicality inclusions with rank $i$.
Observe that $\delta(E_0)= \delta({\cal T})$.
Given a set $S$ of typicality inclusions,  we let: $S_i= S \cap D_i$, for all ranks $i=0,\ldots, n$ in the rational closure, 
thus defining a partition of the set $S$ according to the rank. 
We introduce a preference relation among sets of typicality inclusions as follows:
$S' \prec S$ ({\em $S'$ is preferred to $S$})  if and only if
there is an $h$ such that, $S_h\subset S'_h$ and, for all $j > h$, $S'_j=S_j$.
The meaning of $S' \prec S$ is that, considering the highest rank $h$ in which $S$ and $S'$ do not contain the same defeasible inclusions,
 $S'$ contains more defeasible inclusions in $D_h$ than $S$.

 \begin{definition}
Let $B$ be a concept such that $\rf(B)=k$ and
let $S \subseteq \delta(TBox)$.
$ S  \cup E_k $ is a  {\em maximal set of defeasible inclusions compatible with $B$ in $K$}  
if:
\begin{itemize}
\item
$ E_k \not \models_{\alctr} \tip(\top) \cap \tilde{S} \sqsubseteq \neg B$
\item
and there is no $S' \subseteq \delta(TBox)$
such that 
$E_k \not \models_{\alctr} \tip(\top) \cap \tilde{S'} \sqsubseteq \neg B$ and $S' \prec S$ ($S'$ is preferred to $S$).
\end{itemize}
where $\tilde{S}$ is the materialization of $S$, i.e.,  $\tilde{S}= {\sqcap} \{ (\neg C \sqcup D) \mid  \tip(C) \sqsubseteq D \in S \}$.

\end{definition}
Informally, $S$ is a maximal set of defeasible inclusions compatible with $B$ and $E_k$ if
there is no set $S'$ which is consistent with $E_k$ and $B$ and is preferred to $S$ since it contains more specific defeasible inclusions.
The construction is  similar to that of the lexicographic closure \cite{Lehmann95,Casinistraccia2012}, although, in this case, 
the comparison of the sets of defeasible inclusions with the same rank (i.e. of $S_i$ and $S'_i$) is based on subset inclusion rather than on the size of the sets, as in the lexicographic closure.

To check if a subsumption $\tip(B) \sqsubseteq D$ is derivable from the MP-closure of TBox 
we have to consider all the maximal sets of defeasible inclusions  $S$ that are compatible with  $B$. 
\begin{definition}
Let  $\tip(B) \sqsubseteq D$ be a query and let $k=\rf(B)$ be the rank of concept $B$ in the rational closure.
$\tip(B) \sqsubseteq D$ {\em follows from the MP-closure of TBox} if  
for all the maximal sets of defeasible inclusions   $S$ that are compatible with $B$ in $K$, 
we have:
$$E_k \models_{\alctr} \tip(\top) \sqcap \tilde{S} \sqsubseteq (\neg B \sqcup D)$$
\end{definition}

\normalcolor
Verifying whether a query $\tip(B) \sqsubseteq D$ 
is derivable from the MP-closure of the TBox
in the worst case requires to consider an exponential number (in the number of typicality inclusions in $K$) of
maximal subsets $S$ of defeasible inclusions compatible with  $B$ and $E_k$.
As entailment in $\alctr$ can be computed in  \textsc{ExpTime} \cite{AIJ15}, this complexity is still in  \textsc{ExpTime}.
However, in practice, it is clearly less effective than computing subsumption in the rational closure of TBox,
which only requires a polynomial number of calls to  entailment  checks in $\alctr$, which can be computed by a linear encoding of an $\alctr$ KB into $\alc$ \cite{ISMIS2015}.
Let us consider again the knowledge base in Example \ref{example-HasNiceFeather}.
\begin{example}
Let $K$=({\cal T},{\cal A}), where ${\cal T} = \{Penguin \sqsubseteq Bird, \tip(Bird) \sqsubseteq HasNice$- $Feather$, $\tip(Bird) $ $\sqsubseteq Fly$,  $\tip(Penguin) \sqsubseteq \neg Fly \}$ and ${\cal A}= \emptyset$.
We want to check whether the query: $\tip(Penguin) \sqsubseteq HasNiceFeather$ holds in all minimal canonical MP models of the TBox.
From the rational closure of TBox, we have: $\rf(Bird)=0$,  $\rf(Penguin)=1$ and
\begin{quote}
 $E_0 = \{Penguin \sqsubseteq Bird$, $\tip(Bird) \sqsubseteq Fly, \tip(Bird) \sqsubseteq HasNiceFeather \}$ \\
 $E_1=  \{Penguin \sqsubseteq Bird$,\; $\tip(Penguin) \sqsubseteq \neg Fly \}$
  \end{quote}
Let $S= \{ \; \tip(Penguin) \sqsubseteq \neg Fly, \; \tip(Bird) \sqsubseteq HasNiceFeather \}$.
We have:
$$E_1 \not \models_{\alctr} \tip(\top) \sqcap \tilde{S} \sqsubseteq \neg Penguin$$  
and  $S$ is the unique maximal set of defeasible inclusions compatible with $Penguin$.
As it holds that $E_1 \models \tip(\top) \cap \tilde{S} \sqsubseteq (\neg$ $ Penguin \sqcup HasNiceFeather)$,
then $\tip(Penguin)$ $ \sqsubseteq HasNiceFeather $ is derivable from the MP-closure of the TBox,
which is in agreement with the fact that, in all the minimal S-enriched canonical models of the TBox, the typical penguins have a nice feather. $\hfill \bbox$
\end{example} 

It is easy to see that, in the general case, there may be more then one maximal set of typicality inclusions compatible with a given concept $B$.
Coinsider the following example: 
\begin{example} \label{exa:estensioni_multiple}
Let $K$=(TBox,ABox), where $TBox = \{Penguin \sqsubseteq Bird,\; Penguin \sqcap A \sqcap H \sqsubseteq \bot, \; A \sqsubseteq C,$ 
$ H \sqsubseteq C, \;
\tip(Bird) \sqsubseteq Fly, \; \tip(Bird) \sqsubseteq H , \;  \tip(Bird) \sqsubseteq A,  \; \tip(Penguin) \sqsubseteq \neg Fly \}$.

Observe that typical birds have both the properties $H$ and $A$. However, by the second inclusion in TBox,
a typical penguin cannot inherit both property $A$ and property $H$.
It can inherit just one of them and, semantically, we prefer  penguins having either property $A$ or
property $H$ to the penguins that neither have property $A$ nor have property $H$.
Hence, we can conclude that typical penguins have the property $C$ in all the minimal S-enriched canonical models of the KB.
Let $\mbox{TBox}_{Strict}$ be the strict inclusion in the TBox.
 
Given the query $\tip(Penguin) \sqsubseteq C $,
as before $\rf(Bird)=0$, $\rf(Penguin)=1$ and $E_1= \mbox{TBox}_{Strict}  \cup \{ \tip(Penguin) \sqsubseteq \neg Fly \}$.
We have two maximal sets of defeasible inclusions compatible with the concept $Penguin$: 
\begin{quote}
$S = \{ \tip(Bird) \sqsubseteq H, \;  \tip(Penguin) \sqsubseteq \neg Fly \} $  and \\
$S' =  \{ \tip(Bird) \sqsubseteq A, \;  \tip(Penguin) \sqsubseteq \neg Fly \} $, 
 \end{quote}
 The subsumption $\tip(Penguin) \sqsubseteq C $ is derivable from the MP-closure of TBox
(although neither $\tip(Penguin) \sqsubseteq A $ nor $\tip(Penguin) \sqsubseteq H $
can be derived from the MP-closure of TBox), as we have:
\begin{quote}
$E_1 \models \tip(\top) \sqcap \tilde{S} \sqsubseteq (\neg$ $ Penguin \sqcup C)$ and \\
$E_1 \models \tip(\top) \sqcap \tilde{S'} \sqsubseteq (\neg$ $ Penguin \sqcup C)$.
\end{quote}
$S$ characterizes the typical penguins having the default property  $H$ of birds, while 
$S'$ characterizes the typical  penguins having the default property $A$ of birds.
Observe that $\tip(Penguin) \sqsubseteq C $ is not derivable from 
the rational closure for $\alc$ 
recalled in Section 3,
as the rational closure is weaker then the MP-closure. 
\end{example} 

We show the soundness of the MP-closure construction by proving that the typicality inclusions which follow from the MP-closure of a TBox
hold in all the minimal canonical S-enriched models of the TBox.
\begin{proposition} \label{prop:soundnessMP-closure}
If $\tip(B) \sqsubseteq D$ follows from the MP-closure of $K$, then $K \models^{min}_{\alctrse} \tip(B) \sqsubseteq D$.
\end{proposition}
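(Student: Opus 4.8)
The plan is to prove the contrapositive: assuming $K \not\models^{min}_{\alctrse} \tip(B) \sqsubseteq D$, I would exhibit a maximal set $S$ of defeasible inclusions compatible with $B$ for which $E_k \not\models_{\alctr} \tip(\top) \sqcap \tilde{S} \sqsubseteq (\neg B \sqcup D)$, thereby showing $\tip(B) \sqsubseteq D$ does \emph{not} follow from the MP-closure. So suppose there is a minimal canonical S-enriched model $\emme = \langle \Delta, <_{A_1}, \ldots, <_{A_n}, <, I \rangle$ of $K$ and an element $y \in (\tip(B))^I = min_<(B^I)$ with $y \notin D^I$. The element $y$ is a $<$-minimal $B$-element, so it witnesses a way of being a typical $B$ that violates $D$; the task is to read off from $y$ a set of defeasible inclusions that is maximal and compatible with $B$, and show $D$ genuinely fails for it.

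First I would associate to $y$ the set $S_y$ of all typicality inclusions $\tip(C) \sqsubseteq A_i \in \delta(\mathit{TBox})$ that $y$ \emph{satisfies}, i.e. $y \in (\neg C \sqcup A_i)^I$. The goal is to argue that $S_y \cup E_k$ is a maximal set of defeasible inclusions compatible with $B$ (where $k = \rf(B)$). Compatibility — that $E_k \not\models_{\alctr} \tip(\top) \sqcap \tilde{S_y} \sqsubseteq \neg B$ — should follow because $y$ itself is a $B$-element satisfying all inclusions in $S_y$ (so $y \in \tilde{S_y}^I$), and by canonicity/minimality the underlying $\alctr$ reduct realizes $\tilde{S_y} \sqcap B$ at the appropriate rank. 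For maximality, I would argue by contradiction: if some $S' \succ S_y$ (more specific, in the $\prec$ ordering) were also compatible with $B$, then by canonicity there is a domain element $z$ realizing the more specific profile $\tilde{S'} \sqcap B$, and the specificity condition (4) of Definition \ref{def-SenrichedmodelR}, together with minimality of $\emme$, would force $z < y$, contradicting that $y \in min_<(B^I)$. This is where the alignment between the syntactic preference $\prec$ on inclusion-sets and the semantic global $<$ must be made precise, leaning on the rank-of-concepts correspondence used in the proof of Theorem \ref{stronger_semantics} (namely that $k_\emme$ tracks $\rf$ on minimal canonical models).

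Finally, having identified a maximal compatible $S = S_y$, I would conclude that $E_k \not\models_{\alctr} \tip(\top) \sqcap \tilde{S} \sqsubseteq (\neg B \sqcup D)$: the element $y$ (viewed in the canonical $\alctr$ reduct $\emme'$) sits in $(\tip(\top) \sqcap \tilde{S} \sqcap B)^I$ yet lies outside $D^I$, so it falsifies the subsumption in an $\alctr$ model of $E_k$. Hence $\tip(B) \sqsubseteq D$ fails for at least one maximal compatible $S$, which is exactly the negation of ``$\tip(B) \sqsubseteq D$ follows from the MP-closure,'' completing the contrapositive.

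The main obstacle I expect is the maximality argument, specifically showing that the $<$-minimality of $y$ among $B$-elements rules out any strictly $\prec$-preferred compatible set $S'$. This requires translating the set-theoretic, rank-stratified comparison $S' \prec S$ into a concrete witness $z$ with $z < y$ via condition (4), and verifying that the specificity clause applies with the right matching of violated inclusions at each rank. The delicate point is that condition (4) compares ranks $k_\emme(C_j)$ of the \emph{antecedent} concepts, so I must ensure that the rational-closure rank $\rf$ used to stratify $D_i$ in the MP-closure coincides (on the minimal canonical model) with the model ranks $k_\emme$ — a fact inherited from Theorem \ref{Theorem_RC_TBox} and the constructions in Section \ref{sez:semantica} — and that canonicity supplies the required witness $z$ realizing $S'$'s more specific profile.
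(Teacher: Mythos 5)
Your proposal follows essentially the same route as the paper's proof: contraposition, taking $S$ to be the set of defeasible inclusions satisfied by the minimal $B$-witness, establishing compatibility via the rank correspondence with the underlying rational-closure model, and proving maximality by using canonicity to produce a witness $z$ realizing any $\prec$-preferred $S'$ and then invoking condition (4) to contradict $<$-minimality. The delicate points you flag (aligning $\rf$ with $k_\emme$ on minimal canonical models, and matching violated inclusions rank by rank in condition (4)) are exactly the ones the paper's proof handles via Theorem \ref{stronger_semantics} and the canonicity of $\emme^{RC}$.
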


\begin{proof}
By contraposition.

Let $K$ be a knowledge base and $B$ a concept with $\rf(B)=k$ in the rational closure. 
Assume that for some minimal canonical S-enriched model $\emme = \langle \Delta,<_{A_1}, \ldots, <_{A_n}, <, I \rangle$ of $K$ there is an element $x \in \Delta$ such that $x \in min_<(B^I)$ and $x \not \in D^I$.
We prove that there is a maximal set of defeasible inclusions $S$ compatible with $B$ in $K$, such that 
$$E_k \not \models_{\alctr} \tip(\top) \sqcap \tilde{S} \sqsubseteq (\neg B \sqcup D)$$
i.e., $\tip(B) \sqsubseteq D$ does not follow from the MP-closure of $K$.

Let us define $S$ as the set of all the defeasible inclusions in TBox which are satisfied by $x$ in $\emme$, i.e.
$S = \{ \tip(C) \sqsubseteq E \in \mbox{TBox} \mid \; x \in ( \neg C \sqcup E)^I\}$.
We show that $ E_k \not \models_{\alctr} \tip(\top) \sqcap \tilde{S} \sqsubseteq (\neg B \sqcup D)$.

By Theorem \ref{stronger_semantics}, $\emme$ is the refinement of a minimal canonical rational model of $K$ 
 $\emme^{RC}= \langle \Delta,<_{rc}, I \rangle$.
It is easy to see that  $\emme^{RC}$ is a minimal canonical model of the rational closure of $K$.
By a property of $\emme^{RC}$ (Proposition 12 in \cite{AIJ15}), $\emme_k^{RC}$ (i.e. the model obtained by $\emme$ by collapsing all the element with rank $\leq k$ to rank $0$) satisfies $E_k$: $\emme_k^{RC} \models_{\alctr} E_k$.
Also, as $\rf(B)=k$ and  $x \in \tip(B)^I$, $x$ must have rank $k$ in $\emme^{RC}$, and rank $0$ in $\emme_k^{RC}$
(and, clearly, $k_{\emme,rc}(x)=k$ in $\emme$).  
 Thus, $x \in \tip(\top)^I$, but also $x \in (B \sqcap \tilde{S})^I$, therefore
$\emme_k^{RC} \not \models_{\alctr} \tip(\top) \sqcap \tilde{S} \sqsubseteq \neg B$.
Hence,
 $E_k \not \models_{\alctr} \tip(\top)  \sqcap \tilde{S}  \sqsubseteq \neg B$,
 i.e. $S$ is a set of defeasible inclusions compatible with $B$.
 
 Furthermore, as $x \in (\neg D)^I$,  $E_k \not \models_{\alctr} \tip(\top)  \sqcap \tilde{S} \sqcap \neg D \sqsubseteq \neg B$,
 and hence, $ E_k \not \models_{\alctr} \tip(\top) \sqcap \tilde{S} \sqsubseteq (\neg B \sqcup D)$, i.e.,
 $\tip(B) \sqsubseteq D$ does not follow from the MP-closure of TBox.
 
To show that $S$ is a maximal set of defeasible inclusions compatible with $B$,
we have still to show that $S$ is maximal.
Suppose, by contradiction, it is not. Then there is a set $S'$ such that $S' \prec S$ and  
$E_k \not \models_{\alctr} \tip(\top)  \sqcap \tilde{S'}  \sqsubseteq \neg B$.
Therefore, there must be a $\alctr$ model $\enne= \langle \Delta',<'_{rc} I' \rangle$ of  $E_k$ and an element $y \in \Delta'$, having rank $0$ in $\enne$ such that:
$y \in (\tilde{S'} \sqcap B)^{I'}$. 

As $\emme$ is canonical, then $\emme^{RC}$ is canonical as well. Hence, there must be an element $z \in \Delta$ such that $z \in (\tilde{S'} \sqcap B)^I$
(the interpretation of all non-extended concepts in $z$ is the same as in $y$ in $\enne$).
As $y$ has rank $0$ in $\enne$, $y$ satisfies all the defeasible inclusions in $E_k$.
Hence, the concept $\tilde{S'} \sqcap B$ must have rank $k$ in the rational closure and, therefore, $z$ must have rank $k$ in $\emme^{RC}$.
Thus, $z \in (\tip(\top) \sqcap \tilde{S'} \sqcap B)^I$  in $\emme^{RC}$,
and $z$ is as well an element of $\emme$ satisfying $\tilde{S'} \sqcap B$.

Since $S' \prec S$ there must be some 
$h$ such that, $S_h\subset S'_h$ and, for all $j > h$, $S'_j=S_j$.
Thus, there is some defeasible inclusion $\tip(C') \sqsubseteq E' \in S'$ such that $\tip(C') \sqsubseteq E' \not \in S$.
so that $z$ satisfies $\tip(C') \sqsubseteq E'$ (i.e., $z \in (\neg C' \sqcup  E')^I$, while $x$ violates it (i.e., $x \in (C' \sqcap \neg E')^I$).
On the other hand, all the defeasible inclusion violated by $z$ and not by $x$ cannot have rank $\geq h$, as $x$ satisfies 
only the inclusions $S$ (by construction of $S$) and $S'\prec S$. Therefore, $z <y$ holds by condition (4), and $y$ cannot be a typical $B$ element, thus contradicting the hypothesis.
\hfill \qed
\end{proof}

The converse of Proposition  \ref{prop:soundnessMP-closure} does not hold,
as the MP-closure is not complete for minimal entailment in the S-enriched semantics.
In fact, it may occur that, in some minimal canonical S-enriched model $\emme$, there are two concepts $C_j$ and $C_k$,  
such that $k_{\emme}(C_j) < k_{\emme}(C_k)$ although  in the rational closure $\rf(C_j) = \rf(C_k)$.
Consider the following example.

Consider, for instance, the domain description of  the {\em baby penguin} Example \ref{exa:BabyPenguin}.
In any minimal canonical S-enriched model 
 $k_\emme(BP \sqcap \neg Fly) < k_\emme(BP \sqcap Fly)$. However,  in the rational closure,
$rank(BP \sqcap \neg Fly) = rank(BP \sqcap  Fly)= 2$.

Hence, given the two additional defeasible inclusions $d1=\tip(BP \sqcap \neg Fly) \sqsubseteq H$ and
$d2=\tip(BP \sqcap Fly) \sqsubseteq \neg H$,
 in the S-enriched semantics, by condition (4), we would give a higher preference to $d1$ than to $d2$.
This might cause additional conclusions in the S-enriched semantics with respect to those we get from the rational closure.

Clearly, as the two concepts $BP \sqcap \neg Fly$ and $BP \sqcap Fly$ are disjoint, 
it is not the case that a more specific concept which might inherit both $d1$ and $d2$ (with preference to $d1$), 
but a more complex counterexample to the completeness of MP-closure could be built based on this idea.

\subsection{Comparisons with the  Lexicographic Closure and with ${\cal DL}^N$} \label{sec:Lexicographic}

While in the above examples the multipreference-closure gives the same result as lexicographic closure \cite{Lehmann95,Casinistraccia2012}, this is not the case in general.
Indeed, in the lexicographic closure, when there are several defaults having the same ``degree of seriousness" (the same rank),
a preference is given to situations that violate less defaults with respect to situations violating more defaults, considering the cardinality of the sets of violated defaults.  
In our construction, instead, we do not have a preference for accepting  more defaults  
rather than less, if they all have the same rank.

%

To show that in the MP-closure there is no preference for accepting  two defeasible inclusions 
rather than one (if they all have the same rank),
let us consider the following variant of Example \ref{exa:estensioni_multiple}. 

\begin{example} \label{exa:estensioni_multiple_variante}
Let $K$=(TBox,ABox), where TBox = $\{Penguin \sqsubseteq Bird,\; Penguin \sqsubseteq ((\neg A \sqcap \neg H) \sqcup \neg B) , \; A \sqsubseteq C $, \;
$\tip(Bird) \sqsubseteq Fly, \; \tip(Penguin) \sqsubseteq \neg Fly, $ $ \; \tip(Bird) \sqsubseteq H , \;  \tip(Bird) \sqsubseteq A, \; \tip(Bird) \sqsubseteq B \}$
and ABox=$\emptyset$.
There are two maximal sets of defeasible inclusions compatible with the concept $Penguin$, namely: 
\begin{quote}
$S = \{ \tip(Bird) \sqsubseteq H, \; \tip(Bird) \sqsubseteq A, \;  \tip(Penguin) \sqsubseteq \neg Fly \} $  and \\
$S' =  \{ \tip(Bird) \sqsubseteq B, \;  \tip(Penguin) \sqsubseteq \neg Fly \} $, 
 \end{quote}
The subsumption $\tip(Penguin) \sqsubseteq C $ is not derivable from the MP-closure of $K$
as
\begin{quote}
$ E_1 \models \tip(\top) \sqcap \tilde{S} \sqsubseteq (\neg$ $ Penguin \sqcup C)$, but \\
$ E_1 \not\models \tip(\top) \sqcap \tilde{S'} \sqsubseteq (\neg$ $ Penguin \sqcup C)$. 
\end{quote}
Differently from the lexicographic closure, the MP-closure (as the minimal canonical S-enriched models semantics) does not consider $S$ to be preferable to $S'$,
although 
 $S$ contains two defeasible inclusions of rank $0$ while $S'$ only one (and both contain the same number of inclusions of rank $1$).
$\hfill \bbox$
\end{example}

To compare with the nonmonotonic logic ${\cal DL}^N$ introduced in \cite{bonattiAIJ15}, let us now consider the following reformulation of Example \ref{exa:estensioni_multiple} in ${\cal DL}^N$.
In particular, let $KB= ({\cal D}, {\cal S})$, where:
\begin{quote}
${\cal D} =\{ Bird \sqsubseteq_n Fly, \; Penguin \sqsubseteq_n \neg Fly, \; Bird \sqsubseteq_n H, \; Bird \sqsubseteq_n A  \}$ and \\
$ {\cal S} = \{Penguin \sqsubseteq Bird,\; Penguin \sqcap A \sqcap H \sqsubseteq \bot, \; A \sqsubseteq C,\;  H \sqsubseteq C \} $
\end{quote} 
(${\cal D}$ are the defeasible inclusions and ${\cal S}$ the strict ones).

A  consequence of a KB in ${\cal DL}^N$ is that $NPenguin \sqsubseteq \bot$ (where $\mathit{NPenguin}$ represents the prototypical penguin). 
In fact, while the default $\mathit{Bird \sqsubseteq_n Fly}$ is overridden for normal penguins by the  more specific 
defeasible inclusion $\mathit{Penguin \sqsubseteq_n \neg Fly}$,
the defeasible inclusions $\mathit{Bird \sqsubseteq_n H}$ and $\mathit{Bird \sqsubseteq_n A}$ are not overridden by any other properties of normal penguins.
and, as they cannot be both satisfied by normal penguins, the conclusion is that there cannot be normal penguins at all 
(i.e. the prototype $\mathit{NPenguin}$ is inconsistent),
although there may be penguins.
Clearly, in this case, anything can be concluded about normal penguins  (including $\mathit{NPenguin} \sqsubseteq C$),
and the approach in \cite{bonattiAIJ15} would require a repair the knowledge base by a knowledge engineer as there are conflicting defeasible inclusions
among which the conflict cannot be automatically removed.

With the MP-closure we have to consider two alternative scenarios (the two maximal sets $S$ and $S'$ of defeasible inclusions compatible with the concept $\mathit{Penguin}$). 
As in ${\cal DL}^N$, we cannot conclude that typical penguins are $A$s nor that typical penguins are $B$s. 
However, as a difference with respect to ${\cal DL}^N$, we do not infer that $\tip(Penguin) \sqsubseteq \bot $, and we admit that there can be typical penguins.
Furthermore, we conclude that they are $C$s ($\tip(Penguin) \sqsubseteq C $). 

Clearly, this comes at the price of considering all the maximal sets of defeasible inclusions
compatible with $\mathit{Penguin}$ (as in the lexicographic closure).  
A weaker and more skeptical variant of the MP-closure 
could be defined along the lines of \cite{GiordanoICTCS2017},
where an alternative notion of closure, the {\em skeptical closure}, is proposed 
which is weaker than the lexicographic closure and its computation does not require to generate all the alternative maximally consistent bases.
The construction in \cite{GiordanoICTCS2017} is based on the idea of building a single base, i.e. a single maximal consistent set of defeasible inclusions, 
starting with the defeasible inclusions with highest rank and progressively adding less specific inclusions, when consistent,
but excluding the defeasible inclusions which produce a conflict at a certain stage
without considering all the alternative consistent bases. 

\normalcolor

\vspace{-0.2cm}

\section{Related Work} \label{Related work}

A lot of work  has  been done in order to extend the basic formalism of Description Logics (DLs)  with nonmonotonic reasoning features \cite{Straccia93,baader95b,donini2002,eiter2004,lpar2007,AIJ13,kesattler,sudafricaniKR,bonattilutz,casinistraccia2010,rosatiacm,hitzlerdl,KnorrECAI12,CasiniDL2013,bonattiAIJ15}. 
The purpose of these extensions is to allow reasoning about \emph{prototypical properties} of individuals or classes of individuals. 
A detailed descriptions of these formalisms and of their relations to our approach based on the $\tip$ operator, on preferential semantics and on minimal models, may be found in \cite{AIJ13,AIJ15}.
Further recent approaches to defeasible inference deal with low complexity description logics \cite{Bonatti2011,ijcai2011,Bozzato14,Papini2016,GiordanoDL2016,Pensel2017}.

The interest of rational closure for DLs is that it 
provides a significant and reasonable skeptical nonmonotonic inference mechanism, 
while keeping the same complexity as the underlying logic.
The first notion of rational closure for DLs was defined by Casini and Straccia \cite{casinistraccia2010}. 
Their rational closure construction for $\alc$  directly uses entailment in $\alc$ over a materialization of the KB.
A variant of this notion of rational closure has been studied in \cite{CasiniDL2013}, and a semantic characterization for it has been proposed.
In \cite{dl2013,AIJ15} a notion of rational closure for the logic $\alc$ has been proposed, building on the notion of rational closure proposed by Lehmann and Magidor \cite{whatdoes}, together with a minimal model semantics characterization.

To overcome the limitations of rational closure,  and, in particular, the fact that one cannot separately reason property by property,
the lexicographic closure has been introduced for the description logic $\alc$ by Casini and Straccia in \cite{Casinistraccia2012},
as a generalization of the lexicographic construction by Lehmann \cite{Lehmann95}.
A detailed comparison between the present proposal and lexicographic closure has been presented in Section \ref{sec:Lexicographic}.
To cope with the limitations of rational closure, in \cite{Casinistraccia2011,CasiniJAIR2013} an approach based on the combination of rational closure and \emph{Defeasible Inheritance Networks} has also been developed, and in \cite{Casini2014} a notion of relevant closure has been introduced,
defining defeasibility in terms of justifications.

In \cite{bonattiAIJ15} a non monotonic description logics ${\cal DL}^N$ has been proposed, which supports normality concepts and enjoys good computational properties. In particular, ${\cal DL}^N$ preserves the tractability of low complexity DLs, including ${\el}^{++}$ and $DL$-$lite$.
The logic incorporates a notion of overriding,  namely the idea that more specific inclusions override less specific ones.
A  difference with rational closure is that, in case there are unresolved conflicts among defeasible inclusions with the same preference, in ${\cal DL}^N$ inheritance is not blocked and the conflict is made explicit through the inconsistency of some normality concept. 
As we have seen, this logic also allows to separately reason with different aspects and their inheritance 
through the overriding of conflicting properties. A detailed comparison can be found in Section \ref{sec:Lexicographic}.


An approach related to our approach is given in \cite{fernandez-gil}, where an extension of $\alct$ with several typicality operators, each corresponding to a preference relation is proposed.
This approach is related to ours although  different: the language in  \cite{fernandez-gil}  allows for several typicality operators whereas  we only have a single typicality operator.  The focus of  \cite{fernandez-gil}  is indeed different from ours, as it does not deal with rational closure, whereas this is one of the main contributions of our paper.

Britz and Varzinczak in \cite{BritzDL2017} deal with a notion of normality for roles by parameterizing 
preference order on binary relations in the domain of interpretation.
They allow the use of defeasible roles in complex concepts, as well as in defeasible (concept and role) subsumptions, and
in defeasible role assertions. The work in  \cite{BritzDL2017} extends the notion of defeasible KB in another direction with respect to our proposal,
by allowing multiple preference relations (parametrized by roles) among pairs of individuals (rather than among single individuals). 


To conclude this section, let us observe that the S-enriched semantics introduced in this paper is stronger than the Enriched semantics in \cite{GliozziAIIA2016}, which allows for a 
weaker condition than (4). 
Indeed, minimal entailment in \cite{GliozziAIIA2016} is stronger than entailment under the rational closure, but weaker than minimal entailment under the S-enriched semantics. 
In particular, in Example \ref{exa:BabyPenguin}, minimal entailment in \cite{GliozziAIIA2016} does not allow the conclusion that typical baby penguins do not fly. 
\normalcolor

\section{Conclusions} \label{Conclusions}

In this paper, we have introduced a multipreference semantics for the description logic $\alc$ which is a refinement of the semantics for rational closure 
proposed in \cite{AIJ15}, Beside a global preference relation,  models are equipped with several preference relations indexed with aspects.
In this way, the multipreference semantics allows to reason about the inheritance of different properties in an ontology in a separate way. 
We have introduced a closure construction which is sound with respect to the multipreference semantics,
and we have established comparisons with the lexicographic closure and with the logic ${\cal DL}^N$.

Verifying whether a query $\tip(B) \sqsubseteq D$ 
is derivable from the MP-closure of the TBox
in the worst case requires to consider an exponential number of sets of defeasible inclusions.
However, we expect that a weaker closure construction, 
which only requires a polynomial number of entailment checks in the underlying description logic,
can be defined in the style of the skeptical construction in \cite{GiordanoICTCS2017} mentioned above.
This will be the subject of future work.

Another subject for future work will be the investigation of a possible extension of this semantics to more expressive description logics,
for those cases in which the rational closure can be consistently defined, as established in \cite{CILC2015SI}.

\section*{Acknowledgements}
\noindent
This research is partially supported by INDAM-GNCS Project 2016 ``Ragionamento Defeasible nelle Logiche Descrittive".

\bibliographystyle{plain}

\end{document}